\documentclass[sigconf]{acmart-arxiv}
 \AtBeginDocument{%
   \providecommand\BibTeX{{%
     \normalfont B\kern-0.5em{\scshape i\kern-0.25em b}\kern-0.8em\TeX}}}

\usepackage{microtype}
\usepackage{graphicx}
\usepackage{subfigure}
\usepackage{amsthm}
\usepackage{amsmath}
\usepackage{url}            
\usepackage{booktabs}       
\usepackage{amsfonts}       
\usepackage{nicefrac}       
\usepackage{lipsum}
\usepackage{algorithm}
\usepackage{algorithmic}
\usepackage{hyperref}
\usepackage{tabularx}
\usepackage{makecell}
\usepackage{amsthm}

\newtheorem{theorem}{Theorem}

\newcommand\independent{\protect\mathpalette{\protect\independenT}{\perp}}
\def\independenT#1#2{\mathrel{\rlap{$#1#2$}\mkern2mu{#1#2}}}

\newtheorem{assump}{Assumption}

\theoremstyle{definition}
\newtheorem{definition}{Definition}

\begin{document}

\title{Counterfactual fairness: removing direct effects through regularization}
\author{Pietro G. Di Stefano}
\authornote{To whom correspondence should be addressed}
\email{pietro.distefano@experian.com}
\affiliation{%
\institution{Experian UK\&I and EMEA DataLabs}
\streetaddress{110 Bishopsgate EC2N 4AY}
\city{London}
\state{UK}
}
\author{James M. Hickey}
\email{james.hickey@experian.com}
\affiliation{%
\institution{Experian UK\&I and EMEA DataLabs}
\streetaddress{110 Bishopsgate EC2N 4AY}
\city{London}
\state{UK}
}
\author{Vlasios Vasileiou}
\email{vlasios.vasileiou@experian.com}
\affiliation{%
\institution{Experian UK\&I and EMEA DataLabs}
\streetaddress{110 Bishopsgate EC2N 4AY}
\city{London}
\state{UK}
}

\begin{abstract}
Building machine learning models that are \textit{fair} with respect to an unprivileged group is a topical problem. Modern fairness-aware algorithms often ignore causal effects and enforce fairness through modifications applicable to only a subset of machine learning models. In this work, we propose a new definition of fairness that incorporates causality through the Controlled Direct Effect (CDE). We develop regularizations to tackle classical fairness measures and present a causal regularization that satisfies our new fairness definition by removing the impact of unprivileged group variables on the model outcomes as measured by the CDE. These regularizations are applicable to any model trained using by iteratively minimizing a loss through differentiation. We demonstrate our approaches using both gradient boosting and logistic regression on: a synthetic dataset, the UCI Adult (Census) Dataset, and a real-world credit-risk dataset. Our results were found to mitigate unfairness from the predictions with small reductions in model performance. 
\end{abstract}

\begin{CCSXML}
<ccs2012>
<concept>
<concept_id>10010147.10010257.10010321.10010337</concept_id>
<concept_desc>Computing methodologies~Regularization</concept_desc>
<concept_significance>300</concept_significance>
</concept>
<concept>
<concept_id>10010147.10010257.10010321.10010333.10010076</concept_id>
<concept_desc>Computing methodologies~Boosting</concept_desc>
<concept_significance>300</concept_significance>
</concept>
<concept>
<concept_id>10010147.10010341.10010342.10010343</concept_id>
<concept_desc>Computing methodologies~Modeling methodologies</concept_desc>
<concept_significance>300</concept_significance>
</concept>
</ccs2012>
\end{CCSXML}

\ccsdesc[300]{Computing methodologies~Regularization}
\ccsdesc[300]{Computing methodologies~Boosting}
\ccsdesc[300]{Computing methodologies~Modeling methodologies}

\keywords{Machine Learning, Fairness, Causality}

\maketitle

\section{Introduction}
\label{intro}
One of the most famous concepts in computer science is the one of ``garbage in, garbage out''. Applied to machine learning algorithms, this phrase captures the concept that the ability to train and the quality of the output from a machine learning model is dependent on the quality of the training data presented to it. Advances in recent decades have resulted in machine learning algorithms that can leverage a larger variety of data types and sources for ever more complex learning tasks. This increased capacity to learn from data also raises risks of incorporating undesired biases from the training data into a machine learning model~\cite{Framework2019, AutomatedPredictionLaw2012}. Furthermore, even when the data is completely unbiased, machine learning systems can produce biased results for specific groups, as can the case where the sensitive groups form a small minority of the training data. This problem goes beyond ``bias in, bias out'' becoming in the worst case ``fairness in, bias out''. It has presented itself in a range of domains from credit-risk assessment to determining an individual's propensity for criminal recidivism~\cite{PredictAndServe2016, Recidivism2017}, and has attracted the attention of both regulators and the media~\cite{WeaponsMathDestruction2016}. 

After the ``fairness through unawareness'' paradigm was shown to be flawed~\cite{PhantomMenace2005, Dwork2012}, new approaches to handle discrimination in machine learning have been explored. Recent advances enable practitioners to specify which groups, usually derived from one or more sensitive attributes, they are concerned about possibly treating unfairly and design a system that ameliorates potential bias (unfairness) in the outputs.  These adaptations cover the full pipeline of the machine learning training problem, with the main approaches including pre-processing of the data~\cite{Reweighing2012, DIRemover2015, OptPreProc2017, Dwork2013}, post-processing of outputs~\cite{EqOddsPP2016, CalibEqOdds2017, RejectOptClf2012} and incorporation of fairness constraints directly into the objective function~\cite{IBM2018, FNN2018, Goel2018, MetaLearn2019, NaiveBayesFair2010, FairRed2018}. Unfortunately, all of these methods suffer from drawbacks~\cite{FairnessImpossibility2018, DesigningFairAlgos2018} such as requiring access to sensitive attributes even after model training (an undesirable scenario in many circumstances), ignoring causal structures in the data, and being specific to a particular machine learning algorithm.

These drawbacks pose serious issues for anyone wishing to ensure fairness in their machine learning practices. Firstly, the wide diversity of statistical fairness measures makes it difficult to select an appropriate metric and corresponding fairness-aware algorithm. This is exacerbated by the fact that while many metrics seemingly address the same underlying notion of fairness, those metrics cannot be mathematically optimized simultaneously for a given task~\cite{impossibility}. Furthermore, the statistical nature of the metrics make it difficult to discern correlation from causation when examining decisions and addressing fairness bias. The role of causality~\cite{Causality2009, Chiappa2018, Kusner2017} when reasoning about fairness should not be understated. It is considered of serious importance by social-choice theorists and ethicists and it has been argued that causal frameworks would decisively improve reasoning about fairness~\cite{loftus2018causal}. This has resulted in several causal definitions of fairness that take advantage of the concept of \textit{counterfactuals} to address the potential unfair \textit{causal effects} on model outcomes. However, the implementation of these causal worldviews typically focused on generative models and not on how causality may be incorporated into popular discriminative machine learning models. Moreover, there is a general lack of comparison between models constrained by statistical fairness metrics and causal effects. 

In this paper, we address each of the presented issues in turn within a \textit{Counterfactual Fairness} framework. In this framework, we first describe a novel definition of fairness that seeks to remove the controlled direct effect (CDE) of the sensitive attribute from the model's predictions. We then propose to enforce such definition through regularization. This is achieved through propensity-score matching~\cite{PropensityScore1983, PropensityScore1985} and the development of a mean-field theory. Our fairness-via-regularization approach is applicable to any model trained by minimizing a loss function through differentiation. We will focus on the case of binary classification, with a single binary indicator defining un-/privileged groups, and are exemplified using gradient boosted trees and logistic regression. 

To further allow evaluation of our Counterfactual Fairness regularization framework, we compare its results against a regularization strategy aimed at satisfying full equality of outcomes between groups. This allows us to highlight how the differences between these worldviews manifest in the optimized model.

The effectiveness and drawbacks of our methods are illustrated using a public benchmark dataset, a private commercial credit-risk dataset, and a synthetic dataset. The inclusion of a private commercial credit-risk dataset provides insight into how these approaches could work in an industrial setting. We consider this form of evaluation to be of critical importance, as the main risks of machine learning fairness are beared by end-consumers and businesses through high-velocity decisioning systems built on such datasets.

The structure of the papers is as follows: in Section \ref{sec:notation} we introduce our notation, in Section \ref{sec:back-caus} we provide a background on some key concepts of causality and mediation effects, in Section \ref{background} we provide a discussion on algorithmic fairness, while in Section \ref{fairboost} we present our regularization strategies. We discuss the results of the experiments in Section \ref{experiments} and relationships with existing literature in Section \ref{sec:related}. Finally, we state our conclusions in Section \ref{conclusions}.

\section{Notation and Setting}
\label{sec:notation}
Before beginning our discussions on how fairness is measured and the connection to causal modelling, we introduce our notation. We use $Y$ to refer to the observed label in the data while the sensitive attribute is denoted $Z$. Throughout this work, our groups of interest will be defined by a single, binary sensitive attribute and we only consider binary classification tasks, i.e. $Y, Z \in \{0,1\}$. We denote the covariates that do not define the groups of interest (the insensitive covariates) by $\mathbb{X}$. We denote probabilities with $P(\bullet)$ and probability densities with $p(\bullet)$. We assume that our models provide probability estimates $\tilde{Y} = P(Y = 1|\mathbb{X})$, and $\hat{Y} \in \{0, 1\}$ are the binary outcomes obtained by thresholding $\tilde{Y}$.

\section{Background on Causality}
\label{sec:back-caus}
\begin{figure}[!t]
    \centering
        \includegraphics[width=0.6\linewidth]{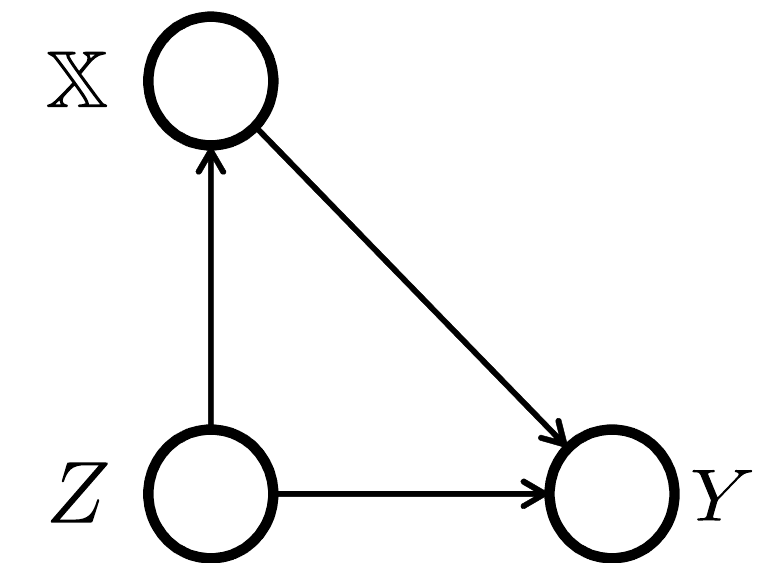}
    \caption{\label{fig:graphs}
Assumed causal graph for our data generating process, where $Z$ is a parent attribute potentially having a direct effect and an indirect effect through a vector-valued set of features $\mathbb{X}$.}
    \label{fig:my_label}
\end{figure}

The aim of this section is to introduce counterfactual quantities and causal effects, especially mediated ones. We follow the causality literature in denoting  counterfactual quantities using subscripts, i.e. we define the counterfactual outcome $Y_{(Z=Z^*,~\mathbb{X} = \mathbb{X}^*)}$ as the outcome that would have been realized had an individual been assigned the values $Z = Z^*$ and $\mathbb{X} = \mathbb{X}^*$. We assume an unconfounded causal graph of the form of Fig. \ref{fig:graphs}, which embeds our assumption of sequential ignorability:

\begin{assump}
 \label{as: str_ign}
 (Sequential ignorability) The following conditional independence relations hold for each realization $Z^*$ and $\mathbb{X}^*$:
  \begin{align}
  Y_{Z=Z^*} \independent& Z \\ \nonumber
  Y_{\mathbb{X}=\mathbb{X}^*} \independent& \mathbb{X} | Z  
 \end{align}
\end{assump}

The former of the above conditions is equivalent to assuming the absence of confounders that would open a ``backdoor'' path between $Z$ and $Y$~\cite{Causality2009}. We argue that for our purposes the absence of such paths is justifiable in a wide variety of cases, as the sensitive attribute is usually measured at birth, and is therefore naturally a parent variable.

To avoid cases in which the protected attribute is completely specified by the mediators, we also assume the following:

\begin{assump}
 \label{as:overlap}
 (Strong ignorability) There is overlap between the two groups: $0 < P(Z=1|\mathbb{X}) < 1,\  \forall \mathbb{X}$.
\end{assump}

We start by defining the Average Treatment Effect (ATE):

\begin{align}
 \label{eq:ate}
 \mathrm{ATE} &= \mathbb{E}[Y_{Z=1} - Y_{Z=0}] \\ \nonumber
              &= \mathbb{E}[Y|Z=1] - \mathbb{E}[Y|Z=0],
\end{align}
where the second line is a consequence of Sequential ignorability and the law of counterfactuals \cite{Causality2009} $P(Y_{Z=Z^*}|Z=Z^*) = P(Y|Z=Z^*)$.

The ATE represents the total causal effect of the protected attribute. In this work, as we shall see in the next section, we are interested in \textit{mediation effects}, i.e. direct and indirect effects, instead of the ATE. We define the point-wise Controlled Direct Effect (CDE) as:

\begin{align}
\label{eq:CDE}
 \mathrm{CDE}(\mathbb{X}) =&  \mathbb{E}[Y_{(Z=1, \mathbb{X})} - Y_{(Z=0, \mathbb{X})}] \\ \nonumber
 =& \mathbb{E}[Y|Z=1, \mathbb{X}] - \mathbb{E}[Y|Z=0, \mathbb{X}],
\end{align}
where, as before, the second line follows from Sequential ignorability. The CDE represents the effect of changing the value of the protected attribute while keeping the value of the covariates fixed. Eq.~\ref{eq:CDE} provides a means for directly estimating the CDE from the data, e.g., via regression techniques~\cite{BaronKenny, VanderWeele2009}.

Another important mediation effect is the Natural Direct Effect (NDE), defined as follows. Given a ``baseline'' value $Z^*$ for the protected attribute, e.g. ``female'' for a gender discrimination problem, we define the NDE as the difference in $Y$ that would be attained by changing $Z$, with the value of the mediating variables $\mathbb{X}$ set to what they would have attained had $Z$ been $Z^*$, i.e. $\mathbb{X}_{Z^*}$. This yields:
\begin{align}
\label{eq:NDE}
 \mathrm{NDE}(Z^*) &= \mathbb{E}[Y_{(Z= 1 - Z^*, \mathbb{X} = \mathbb{X}_{Z^*})} - Y_{Z=Z^*}] \\ \nonumber
                    &= (-1)^{(Z^* - 1)} \int \mathrm{d}\mathbb{X} \ \mathrm{CDE}(\mathbb{X}) p(\mathbb{X}|Z=Z^*),
\end{align}
where the second line is proved in Ref. \cite{Pearl2001}. The natural direct effect has a few nice properties. First, it has straightforward implications in discrimination problems. Second, contrary to the CDE case, it has an indirect counterpart, namely the Natural Indirect Effect (NIE), defined as:

\begin{equation}
 \label{eq:cie}
 \mathrm{NIE}(Z^*) = \mathbb{E}[Y_{Z=Z^*} - Y_{(Z=Z^*, \mathbb{X} = \mathbb{X}_{1 - Z^*})}].
\end{equation}
The NIE is easily interpreted as the effect on the baseline population's response of changing $\mathbb{X}$ to the value it would have naturally obtained in the non-baseline population while keeping the value of the protected attribute fixed.

Natural direct and indirect effects are, in contrast with the CDE, population-wide quantities and depend on the choice of a baseline value for the protected attribute. 

We conclude this section by citing two equalities derived by Pearl \cite{Pearl2001}, which will be useful later and describe the relation between ATE and Natural mediated effects:

\begin{equation}
  \label{eq:ate-cie}
\begin{cases}
  \mathrm{ATE} &= \mathrm{NIE}(1) - \mathrm{NDE}(0) \\ 
 \mathrm{ATE} &= \mathrm{NDE}(1) - \mathrm{NIE}(0) 
\end{cases}
\end{equation}

\section{Algorithmic Fairness}
\label{background}

\subsection{Statistical Fairness}
\label{algo_fairness}
The traditional definitions of algorithmic fairness are statistical in nature and reflect underlying worldviews on how the ``true'' unmeasured target is recorded, its relationship to $Z$ and how it is predicted by a machine learning model. In this framework, statistical measures are derived from one's belief of the latent space structure~\cite{impossibility} rather than by specifying causal pathways between the unobserved ``true'' label and the collected data. Under this construct, two prevailing worldviews of statistical fairness emerge: ``We're all equal'' and ``What you see is what you get''. The former focusses on outcomes defined by group membership and seeks to ensure groups are treated equally through balanced outcomes. Contrastingly, the latter favours is a worldview where the data is accurate and so it seeks to offer similar individuals similar outcomes as informed by the data. We focus here on one of the most popular fairness metrics, namely statistical parity difference. 

SPD is a group fairness measure on an algorithm's outcome, $\hat{Y}$, and it is $0$ (maximally fair) only when $P[\hat{Y}=1|Z=1] = P[\hat{Y}=1|Z=0]$. It is defined as follows:
\begin{equation}
\label{eq:spd}
 \mathrm{SPD} = |P[\hat{Y}=1|Z=1] - P[\hat{Y}=1|Z=0]|.
\end{equation}
We note that this measure can also be applied to the data by changing $\hat{Y}$ to $Y$ in Equation \ref{eq:spd}. As there is no link between the algorithm outcome and measurement, this difference can be minimized by changing the outcomes of members of each group independent of all other attributes and so can be viewed as a ``lazy penalization''.

\subsection{Causal Modelling and Fairness}
\label{sec:counterfactual-fairness}
The measures of fairness presented in the Section \ref{algo_fairness} emerge from \textit{a priori} worldviews on how the underlying ``true'' label is related to $Z$ and the veracity of the recorded data. They do not formally encode the causal relationships between the $Z$, $\mathbb{X}$ and $Y$. Consequently, how much causal effect on $\hat{Y}$ can be attributed to $Z$, either directly or indirectly, is handled implicitly in the worldview employed.

The need to explicitly distinguish between direct and indirect effect is well illustrated by the 1973 University of California, Berkeley gender discrimination scandal~\cite{Berkeley}. In that case, the data showed significant bias in admissions for male and female applicants. However, after controlling for the department chosen by the applicants, that bias disappeared. It was actually found that female applicants had lower overall admission rates not because they were discriminated against, but simply because they were applying to more competitive departments. 

In the Counterfactual Fairness worldview we examine here, we are only concerned with biases that are consequences of \textit{direct effects}. Specifically, we seek to identify, and correct for, how much an outcome for an individual assigned a specific value of $Z$ would change compared to a \textit{counterfactual} world where they had been assigned the alternative value for $Z$ but all other factors had remained the same~\cite{EmplCounterfDiscr}.

This worldview  requires in-depth understanding of the data collection and generation process. In particular, it is important to define what information can be recorded in $\mathbb{X}$. We require that variables $\mathbb{X}$ are ``fair'' in the sense specified by the following requirements:

\begin{enumerate}
 \item They were not measured before $Z$.
 \item They do not, either individually or in combination, directly measure discriminatory attributes.
 \item They are relevant to the problem at hand.
\end{enumerate}

For example, in a financial application where the sensitive attribute is race, ``income'' would be typically considered fair while ``race of the applicant's mother'' or ``blood pressure'' would not be. Taking all of this together, we propose the following definition:

\begin{definition}
\label{def:counterfactual_fairness}
(Counterfactual Fairness) Given a set of fair covariates $\mathbb{X}$, a sensitive attribute $Z$ and a target $Y$, a fair model is a model that does not learn the controlled direct effect of $Z$ on $Y$.
\end{definition}

We stress that any effect mediated by unobserved variables or variables not included in $\mathbb{X}$ during training will be embedded in the CDE and hence the $\mathbb{X}$ will determine the size of the bias we seek to remove.

We finally note that our definition can be seen as an instance of path-specific counterfactual fairness~\cite{ZhangDirectEffect, FairInference2018, Chiappa2018, CFGAN} with the additional requirements on the covariates $\mathbb{X}$ highlighted above.

\section{Fair Losses}
\label{fairboost}
To improve the fairness aspects of a model's output, in both the Statistical Fairness (Sec.~\ref{algo_fairness}) and Counterfactual Fairness (Sec.~\ref{sec:counterfactual-fairness}) worldviews, we modify the loss function and apply regularization. The modifications to the loss function can be applied to all algorithms trained by iteratively minimizing a loss through differentiation, e.g. through gradient descent or gradient boosting.

We combine an original loss function ($\mathcal{L}_{o}$), which captures our utility objectives, with a fairness penalty ($\mathcal{R}_{f}$) using a regularization weight ($\lambda$). The generic form of fairness-aware loss function ($\mathcal{L}_f$) is as follows:
\begin{equation}
 \label{convex_opt}
 \mathcal{L}_f = (1 - \lambda) \mathcal{L}_{o} + \lambda \mathcal{R}_{f},
\end{equation}
where the modifications $\mathcal{R}_{f}$ are differentiable. As we're focussing on binary classification, we take $\mathcal{L}_{o}$ to be the binary cross-entropy of $Y$ and $\tilde{Y}$.

It is important to note that $Z$ is only required at training time to evaluate $\mathcal{R}_{f}$ and is not required for prediction. This is a very useful feature for real-world applications, as obtaining $Z$ can be difficult.

\subsection{Statistical Fairness Regularization}
\label{fairboost:traditional}
For illustrative purposes and to compare our causal worldview with the statistical ``we are all equal'' one, we propose a very simple loss aimed at reducing the SPD of the average scores:

\begin{equation}
\label{eq:spd-loss}
\mathcal{R}^{\mathrm{SPD}}_{f} = \{\mathbb{E}[\tilde{Y}|Z=1] - \mathbb{E}[\tilde{Y}|Z=0]\}^2
\end{equation}

\subsection{Counterfactual Fairness Regularization}
\label{fairboost:causal}
To satisfy Definition~\ref{def:counterfactual_fairness}, the CDE must not be learned by the model. The point-wise CDE is given by Eq. (\ref{eq:CDE}), which can be estimated using regression techniques~\cite{BaronKenny, VanderWeele2009}. However, our algorithm, as we shall see, would require us to fit such regressions iteratively, which can be computationally inefficient if we condition on the full set of covariates. To circumvent this possible computational bottleneck, we employ a ``balancing score''. Balancing scores are functions $b(\mathbb{X})$ of the covariates that make $\mathbb{X}$ and $Z$ conditionally independent, i.e.:
\begin{equation}
\label{eq:bal-scores}
 \mathbb{X} \independent Z | b(\mathbb{X}).
\end{equation}
In our work we use one such score, namely the ``propensity score'' $b(\mathbb{X}) = P(Z=1|\mathbb{X})$. Traditionally this score is utilized in scenarios where $\mathbb{X}$ play the role of confounders rather than mediators. Although the latter case is true in our instance, utilizing the propensity score solves our key computation problem whenever Assumptions 1 and 2 are true.

Given assumptions 1 and 2, we can define a ``mean field'' CDE given by:

\begin{align}
\label{eq:CDEb}
 \mathrm{MFCDE}(b) &= \mathbb{E}[Y|Z=1, b] - \mathbb{E}[Y|Z=0, b] \\ \nonumber
                    &= \frac{\int_{\mathcal{X}_b} \mathrm{d}\mathbb{X}\  p(\mathbb{X}) \mathrm{CDE}(\mathbb{X})}{\int_{\mathcal{X}_b} \mathrm{d}\mathbb{X}\  p(\mathbb{X})},
\end{align}
where $\mathcal{X}_{b^*} := \{ \mathbb{X}\  \mathrm{s.t.}\  b(\mathbb{X}) = b^*\}$. The second line of Eq. (\ref{eq:CDEb}) is proven in Appendix \ref{app:cde}. The MFCDE can thus be interpreted as an average of the CDE across a volume with constant $b=b^*$. We also note that, as in a classic result by Rosenbaum and Rubin~\cite{PropensityScore1983} which we extend to the CDE in Appendix \ref{app:cde}, the population-wide averages of MFCDE and CDE are the same, i.e.

\begin{equation}
 \label{eq:rubin}
 \int \mathrm{d}b\  p(b)\  \mathrm{MFCDE}(b) = \int \mathrm{d} \mathbb{X}\   p(\mathbb{X}). \mathrm{CDE}(\mathbb{X})
\end{equation}

\begin{figure*}[t!]
\centering
  \hspace{0.05\linewidth}
  \includegraphics[width=0.8\linewidth]{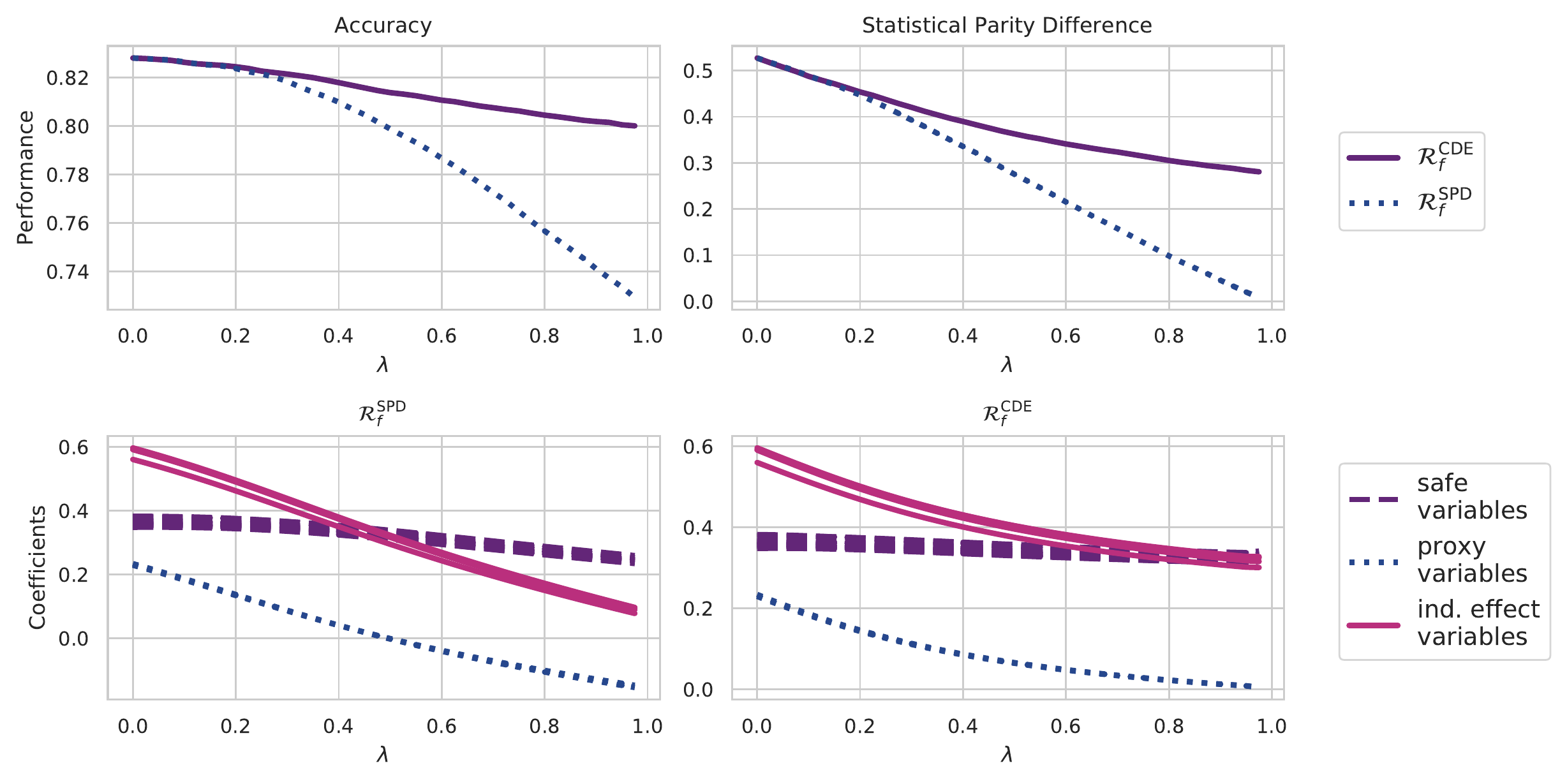}
 \caption{\label{fig:synth}
Results for the synthetic dataset and logistic regression models using the SPD Loss of Eq. (\ref{eq:spd-loss}) and the CDE Loss of Eq. (\ref{eq:cde-loss}). For the CDE loss, we used $N_1=1, N_2=0$. All the results are plotted against the regularization strength $\lambda$. Top panel: Accuracy (left) and SPD (right) using a threshold equal to $0.5$. Bottom panel: Coefficients of the logistic regression models for the safe, indirect effect and proxy variables (see Section \ref{sec:synth-ds})}
\end{figure*}

A stronger statement can be made about the MFCDE if we assume the following:

\begin{assump}
 \label{as:mean-field}
 (Mean field approximation) Wherever $p(\mathbb{X}) > 0$, $\mathrm{CDE}(\mathbb{X})$ is approximately constant in $\mathcal{X}_{b(\mathbb{X})}$.
\end{assump}

This allows us to approximate the MFCDE with the point-wise CDE, i.e. $\mathrm{CDE}(\mathbb{X}) \simeq \mathrm{MFCDE}(b(\mathbb{X}))$. A key result is then that minimization of the CDE (per Counterfactual Fairness) is approximately equivalent to minimizing the MFCDE. Thus, our goal is now to iteratively train a model conditioned on the fair covariates $\mathbb{X}$ only and which does not learn the MFCDE. 

As a first step, we use our training data to estimate $\mathbb{E}[Y|Z, b]$ with a regression model such as:
\begin{equation}
\label{eq:y_b_z}
 \mathbb{E}[Y|Z, b] = \sum_{k=0}^{N_1}\ \alpha_k b^k + Z\ \sum_{k=0}^{N_2}\ \beta_k b^k,
\end{equation}
for arbitrary $N_1$ and $N_2$. We then use this to define a ``fair'' target $Y_f$ such that:

\begin{align}
 \label{eq:yf}
 \mathbb{E}[Y_f|Z, b] &= \mathbb{E}[Y|Z, b] + \frac{(-1)^Z}{2} \mathrm{MFCDE}(b) \\ \nonumber
            &= \sum_{k=0}^{\max(N_1, N_2)} \gamma_k b^k,
\end{align}
where we used the fact that $\mathrm{MFCDE}(b) = \sum_{k=0}^{N_2}\ \beta_k b^k$ and, employing the indicator function $I(\bullet)$, we defined

\begin{equation}
 \label{eq:gammas}
 \gamma_k = \alpha_k\  I(k \le N_1) + \frac{1}{2} \beta_k\  I(k \le N_2).
\end{equation}

$Y_f$ is easily interpreted as a version of a target corrected by a symmetrized version of the CDE, where privileged and unprivileged groups receive anti-symmetrical corrections.

We can also show (see Appendix \ref{app:cde}) that:

\begin{equation}
\label{eq:yf_nie}
 \mathbb{E}[Y_f|Z=1] - \mathbb{E}[Y_f|Z=0] = \frac{\mathrm{NIE}(1) - \mathrm{NIE}(0)}{2},
\end{equation}
meaning that the ATE of $Z$ on $Y_f$ is equal to symmetric version of the indirect effect on $Y$, thereby justifying the definition of fair target.

Having defined our ``fair'' target, and how it relates to the MFCDE (hence, to the CDE), we now describe the procedure to remove the CDE during model training. Firstly, at each iteration of the optimization algorithm, we extract the probability scores estimated by our model at that iteration, the balancing scores and the protected attributes for each training example to define a surrogate training set $\{ (\tilde{Y}_i, b_i, Z_i)\}$. Then, we use this latter to estimate the coefficients of surrogate model:

\begin{equation}
\label{eq:y_hat_b_z}
 \mathbb{E}[\tilde{Y}|Z, b] = \sum_{k=0}^{\max(N_1, N_2)}\ \tilde{\alpha}_k b^k + Z\ \sum_{k=0}^{N_2}\ \tilde{\beta}_k b^k.
\end{equation}

To remove the direct effect, we want to limit the coefficients of this surrogate model to those of the fair target $Y_f$. Our aim will then be to achieve $\tilde{\beta}_k \to 0$ and, for $k>0$, $|\tilde{\alpha}_k| < |\gamma_k|$, which leads us to the following loss:

\begin{align}
 \label{eq:cde-loss}
 \mathcal{R}^{\mathrm{CDE}}_{f} = \sum_{k=1}^{\max(N_1, N_2)} I(|\tilde{\alpha}_k| > |\gamma_k|) (\tilde{\alpha}_k - \gamma_k)^2  + \sum_{k=0}^{N_2} \tilde{\beta}_k^2
\end{align}

In order for our loss to be differentiable, we require that the coefficients $\tilde{\alpha}_k$ and $\tilde{\beta}_k$ are estimated as functions which can be differentiated w.r.t. all the $\tilde{Y}_i$. In our experiments, we used Ordinary Least Square (OLS) regression. Although a wide array of models can be used to define $\mathcal{R}^{\mathrm{CDE}}_{f}$, we recommend that the selected model is collapsible~\cite{Vansteelandt2010, Mood2010} in order to give the correct causal interpretation to the coefficient. 

We emphasize how computing the OLS regression coefficients has a complexity that scales quadratically with the number of covariates. This means that, especially in cases where the number of covariates $\mathbb{X}$ is large, as is often the case in today's applications, our mean field solution allows for a dramatic speed-up.

\section{Experiments}
\label{experiments}
We evaluated our algorithms on three binary classification tasks using a synthetic dataset, which we included for illustrative purposes, the UCI adult dataset~\cite{UCI}, and a commercial credit-risk dataset~\footnote{Private to Experian}. We briefly describe these datasets in Section \ref{sec:datasets} and provide a summary in Table~\ref{tab:data}.

For each dataset, algorithm and loss, we computed results by sweeping the regularization parameter $\lambda$ from $0$ to $0.975$ using a step size of $0.025$. We evaluated every model trained by its fairness, as measured by SPD, and accuracy (or precision, for the credit-risk models). For the synthetic dataset, we evaluated our results using only logistic regression, while on the other two datasets we used both logistic regression and XGBoost~\cite{XGBoost2016}.

For the synthetic dataset, we show results for both the SPD loss [Eq. (\ref{eq:spd-loss})], and CDE loss [Eq. (\ref{eq:cde-loss})], comparing how the results highlight the differences in worldviews that these losses entail for a problem for which we know the causal story. Contrastingly, for the adult and credit-risk we only display results for the CDE loss.

We employed logistic regression with $L_1$ loss to estimate the propensity scores $b$. We also mention that, for higher values of $\lambda$, we found it beneficial to first pre-train our models using small values of $\lambda$ until convergence, and only after that slowly increase $\lambda$ up to the desired value.

We note that our losses are twice differentiable and we supply the diagonal of the Hessian during training to the XGBoost algorithm in every case.
\subsection{Datasets}
\label{sec:datasets}
\begin{table}[t]
\centering
\begin{tabular}{||c | c | c | c ||}
\hline
Dataset & \makecell{n. rows} & \makecell{n. features} &  \makecell{SPD} \\
\hline\hline
Synthetic & 100,000 & 16 & 0.54  \\ 
UCI Adult & 48,842 & 9 & 0.20  \\
Credit-risk & 71,809 & 58 & 0.15   \\
\hline
\end{tabular}
\caption{Summary of datasets included}
\label{tab:data}
\end{table}
\begin{figure*}[t!]
\centering
  \includegraphics[width=0.8\linewidth]{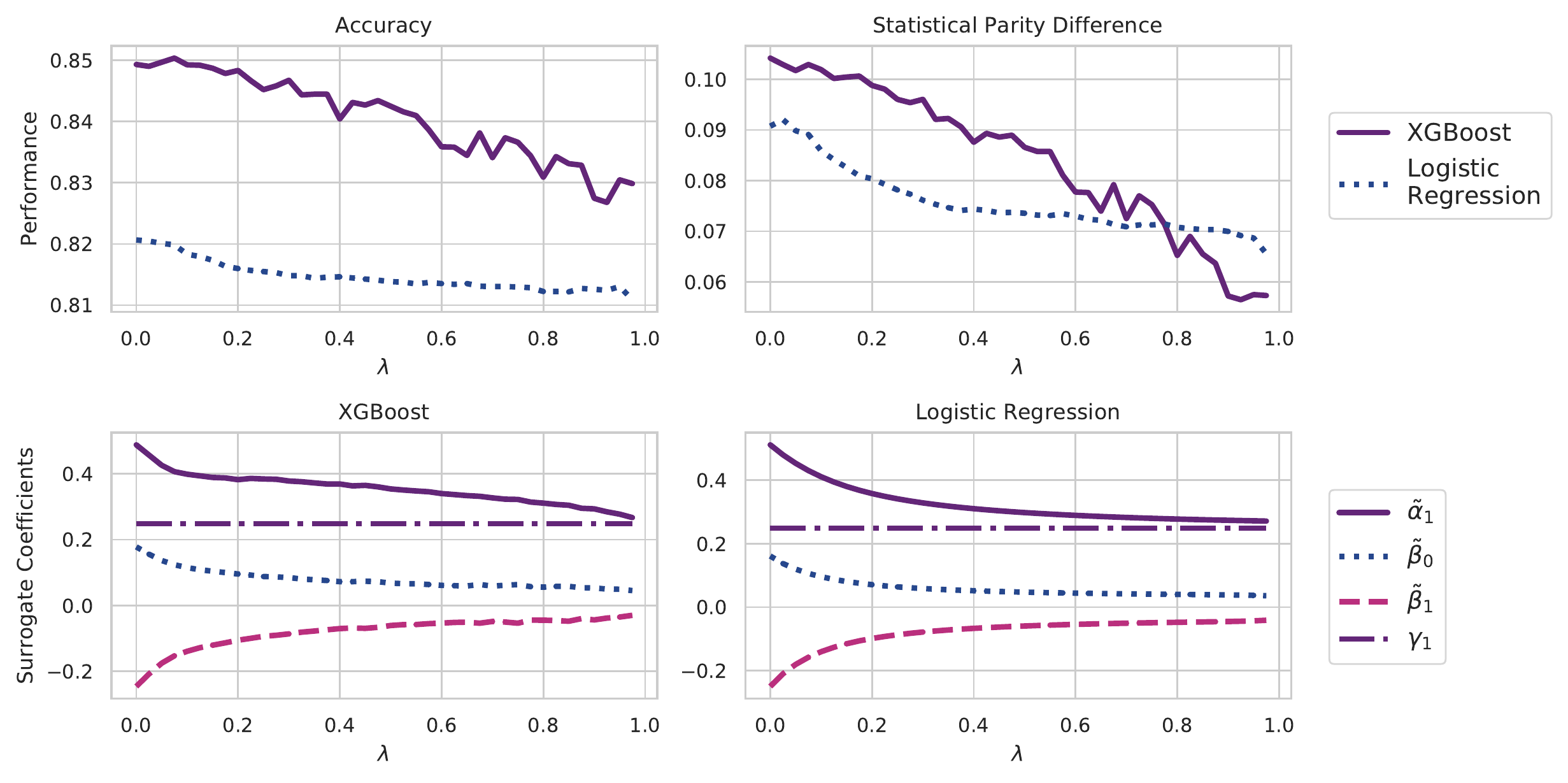}
 \caption{\label{fig:adult}
Results for the UCI Adult dataset employing XGBoost and Logistic regression models modified through the loss of Eq. (\ref{eq:cde-loss}). Results are plotted against the regularization strength $\lambda$. For these experiments, we employed $N_1=N_2=1$. Top panel: Accuracy (left) and SPD (right) using a threshold equal to $0.5$. Bottom panel: surrogate coefficients [see Eq. (\ref{eq:y_hat_b_z})] estimated on the test set for the XGBoost  (left) and Logistic Regression (right) models.}
\end{figure*}

\subsubsection{Synthetic Data}
\label{sec:synth-ds}
We generated a synthetic dataset mimicking the graph of Fig. (\ref{fig:graphs}). We sampled the protected attribute out of a Bernoulli distribution and three sets of covariates: ``safe'' covariates $\mathbb{X}_s \sim \mathcal{N}(0, 1)$, ``proxy'' covariates and ``indirect effect'' covariates sampled from $\mathbb{X}_p, \mathbb{X}_i \sim \mathcal{N}(Z, 1)$. We then define the log-odds of the binary target as $S_Y = 0.25 \mathbf{w} \cdot(\mathbb{X}_i + \mathbb{X}_s) + 1.25 Z$. Here, $\mathbf{w}$ is a vector of ones. We finally sample $Y \sim \mathrm{Bern} \left( \frac{1}{1 + \mathrm{e}^{-S_Y}} \right)$.
For our experiments, we used $10$ safe, $4$ indirect effect and $2$ proxy variables.
\subsubsection{UCI Adult Dataset}
For this dataset, the goal is to predict whether a person will have an income below or above $50K\ \mathrm{USD}$. In this dataset, we are interested in removing gender bias and so we consider gender as our protected attribute. Furthermore, we excluded the following additional sensitive attributes: race, marital status, native country and relationship from our models. The other covariates primarily relate to financial information, occupation and education.

\subsubsection{Private Credit-Risk Dataset}
In this dataset, we are trying to infer the probability that a customer will not default on their credit given curated information on their current account transactions. Here, we're interested in removing bias related to age. We binarized the age variable dividing our examples in two groups, an ``older'' group of people over 50 and an ``younger'' group of people under 50.

\subsection{Results}

Results for the synthetic dataset and logistic regression models are shown in Fig. \ref{fig:synth}. We observe that, as we sweep $\lambda \to 1$, the accuracy of the CDE loss is generally higher than that of the SPD loss, while the SPD is higher. This can be explained by the fact that SPD loss enforces a far more stringent view on fairness than the CDE loss, as it removes both average direct and indirect effects. The SPD loss converges to an almost ideal performance in its target fairness metric. In the bottom panel, which is our main result for this dataset, we show coefficients of the logistic regression model plotted against the regularization strength $\lambda$. We find that the coefficients of the variables that were constructed to be correlated both with the target $Y$ and the protected attribute $Z$ (indirect effect variables) and of those correlated with $Z$ alone (proxy variables) become smaller as the fairness regularization increases. Furthermore, we observe that the coefficient changes with $\lambda$ reflect the different worldviews described in Section \ref{background}. The CDE loss is very faithful to the original causal story (see Section \ref{sec:synth-ds}), where the causal sampling coefficients of the fair and indirect effect variables are the same and the coefficients of the proxy variables are zero. Contrastingly, the SPD loss cause the coefficients to converge to values very different from the ones of the data generating distribution, which in this worldview is in itself deemed ``unfair''.

\begin{figure*}[t!]
\centering
  \includegraphics[width=0.8\linewidth]{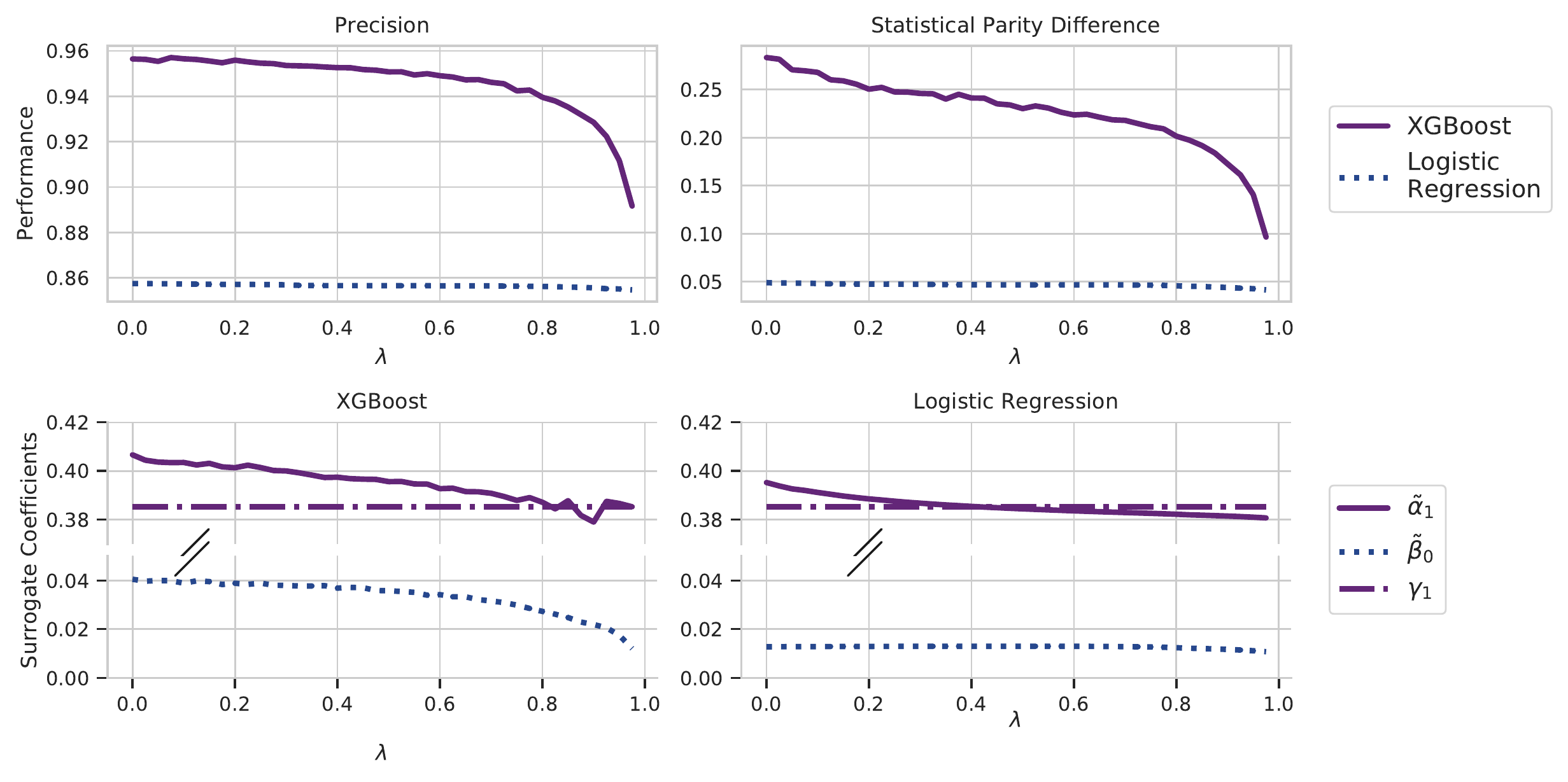}
 \caption{\label{fig:creditrisk}
Results for the credit-risk dataset employing XGBoost and Logistic regression models modified through the loss of Eq. (\ref{eq:cde-loss}). Results are plotted against the regularization strength $\lambda$. For these experiments, we employed $N_1=1, N_2=0$. Top panel: Accuracy (left) and SPD (right) using a threshold equal to $0.85$. Bottom panel: surrogate coefficients [see Eq. (\ref{eq:y_hat_b_z})] estimated on the test set for the XGBoost  (left) and Logistic Regression (right) models.}
\end{figure*}

Results for the UCI Adult dataset are shown in Fig. \ref{fig:adult}. For this dataset, referring to Eq. (\ref{eq:cde-loss}), we used $N_1 = N_2 = 1$. For the XGBoost models, we observe that, as $\lambda$ is increased, accuracy drops by a tolerable amount, i.e. from roughly $0.85$ at $\lambda=0$ to roughly $0.83$ at $\lambda=0.975$. At the same time, SPD is reduced from $0.10$ to $0.06$. Results for logistic regression are more nuanced: accuracy goes from $0.82$ to $0.81$, while SPD decreases from $0.09$ to $0.07$. In the bottom panel we show how the coefficients of the surrogate model [see Eq. (\ref{eq:y_hat_b_z})] change as $\lambda$ increases. We notice how $\tilde{\alpha}_1$ converges to its target $\gamma_1$ while the $\tilde{\beta}_i$ coefficients approach zero.

Finally, in Fig. \ref{fig:creditrisk} we show results for the credit-risk dataset. Here we used $N_1 = 1, N_2 = 0$. Also, we tailored our results to the specificity of the credit-risk problem, where usually people are approved when their default probability is very low. We therefore used a threshold of $0.85$ and, since we are more interested in the cost of false positives than the one of false negatives, we evaluate the performances of the model using precision. Results for XGBoost show that precision went from $0.96$ to $0.89$, while SPD went from $0.28$ to $0.09$. Logistic regression did not seem particularly affected by the regularization. In this case, precision went from $0.86$ to $0.85$, while SPD dropped from $0.05$ to $0.04$. To explain this observation, we conjecture that logistic regression does not have enough statistical capacity to absorb the direct effect when it's not directly exposed to the protected attribute $Z$. Results for the surrogate coefficients (bottom panel) show how the XGBoost models display generally good convergence.

\section{Related Work}
\label{sec:related}
There has been significant advancement in the areas of incorporating fairness into machine learning algorithms and the role of causality in fairness.

\textit{Training fair models}: In Ref.~\cite{IBM2018}, a fair neural network was built through the use of an adversarial model that tries to predict the sensitive attribute from the model outputs. Similarly, Ref.~\cite{FNN2018} developed statistical fairness regularizations to debias neural networks. The form of these regularizations restricted them to neural networks. Ref.~\cite{Goel2018} incorporate convex regularizations directly into training a fair logistic regression, but this approach relies on empirical weights to represent historical bias and is directly related to proportionally fair classification rather than a traditional fairness measure. Other approaches have instead posed the problem as one of constrained optimization while others still have used multiple models to remove bias~\cite{NaiveBayesFair2010, PredRemover2012}.  Here, we propose novel regularizations that are applicable to any model trained using gradient-based optimizers and are designed to target both the classic and causal metrics directly in the scores.

\textit{Causal Fairness}: Several works have recently addressed the problem of tackling some definition of counterfactual fairness~\cite{Kusner2017, CausalReasoningDiscrim2017}. In Refs.~\cite{ZhangDirectEffect, FairInference2018, Chiappa2018, CFGAN} the problem of identifying and removing path specific effects is studied. Those papers consider generative (or partly generative ~\cite{FairInference2018}) models. Since direct effects are particular cases of path specific effects, the scope of those works is somewhat bigger than ours but, crucially, they do not provide a generic method for incorporation of such effects into standard discriminative machine learning models. Our proposal also benefits from being model-agnostic. Furthermore, to compare our worldview with these path-specific works, we borrow an example from \cite{FairInference2018}. In this example, the influence of gender on hiring outcomes for a white collar job might be considered fair through a variable such as education, and unfair through a variable such as physical strength. Here, we argue that for most practical purposes our definition of fair covariates $\mathbb{X}$ (see Section \ref{sec:counterfactual-fairness}) should suggest that physical strength should not be selected among the $\mathbb{X}$ as it easily arguable that it itself is discriminatory attribute and is irrelevant to the problem: one might not want to discriminate between stronger and weaker people for this particular application, regardless on the effect that gender has on it. This combination of what variables are deemed fair individually irrespective of the pathway in conjunction with the CDE makes our worldview particularly novel and applicable in many industrial settings.

\section{Conclusions}
\label{conclusions}
In this work, we extended the literature by proposing a new definition of fairness that focuses on the removal of the controlled direct effect and is causal in nature. Incorporating causal effects into notions of fairness is crucial~\cite{Berkeley}, and we argue that our definition is intuitive and general. We demonstrated how to enforce it through the use of a regularization term. Our solution is particularly appealing with respect to existing ones because it is applicable to any model that uses gradient-based optimization, including popular discriminative models. We exemplified our approaches on three datasets using XGBoost and logistic regression. In all cases, our framework allowed for a realistic trade-off between fairness and predictive performance.

\section{Acknowledgements}
We are indebted to C. Dhanjal, F. Bellosi, G. Jones and L. Stoddart for fruitful discussions. We also wish to acknowledge Experian Ltd and J. Campos Zabala for supporting this work.
\appendix
\section{Balancing scores and Mediation Effects}
In this Appendix, we wish to justify Eq. (\ref{eq:CDEb}), (\ref{eq:rubin}) and (\ref{eq:yf_nie}). Eq. (\ref{eq:CDEb}) is proven below.
\label{app:cde}
\begin{theorem}
 If Assumptions 1 and 2 hold, then, if $b(\mathbb{X})$ is a balancing score, the second line of Eq. (\ref{eq:CDEb}) holds.
\end{theorem}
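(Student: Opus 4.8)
The plan is to start from the first line of Eq.~(\ref{eq:CDEb}), namely the observational contrast $\mathbb{E}[Y|Z=z,b]$ for $z\in\{0,1\}$, and to rewrite each conditional mean as an integral over the level set $\mathcal{X}_b = \{\mathbb{X} : b(\mathbb{X})=b\}$. By iterated expectation I would write $\mathbb{E}[Y|Z=z,b] = \int \mathbb{E}[Y|Z=z,\mathbb{X},b]\, p(\mathbb{X}|Z=z,b)\,\mathrm{d}\mathbb{X}$, so that the quantity is expressed through the per-point response $\mathbb{E}[Y|Z=z,\mathbb{X}]$ and the conditional law of the covariates inside the level set. Subtracting the $z=0$ term from the $z=1$ term at the end should reconstruct the point-wise contrast.

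Two simplifications do the real work. First, because $b=b(\mathbb{X})$ is a deterministic function of $\mathbb{X}$, conditioning on $\mathbb{X}$ already fixes $b$, so $\mathbb{E}[Y|Z=z,\mathbb{X},b]=\mathbb{E}[Y|Z=z,\mathbb{X}]$ for every $\mathbb{X}\in\mathcal{X}_b$ and the redundant conditioning on $b$ can be dropped. Second---and this is the crux---I would invoke the balancing-score property of Eq.~(\ref{eq:bal-scores}), $\mathbb{X}\independent Z\,|\,b$, to replace $p(\mathbb{X}|Z=z,b)$ by $p(\mathbb{X}|b)$, which no longer depends on $z$. Writing this conditional density as the normalized restriction of $p(\mathbb{X})$ to the level set, $p(\mathbb{X}|b)=p(\mathbb{X})\,/\!\int_{\mathcal{X}_b}\mathrm{d}\mathbb{X}'\,p(\mathbb{X}')$ on $\mathcal{X}_b$, gives $\mathbb{E}[Y|Z=z,b]=\big(\int_{\mathcal{X}_b}\mathrm{d}\mathbb{X}\,\mathbb{E}[Y|Z=z,\mathbb{X}]\,p(\mathbb{X})\big)\big/\big(\int_{\mathcal{X}_b}\mathrm{d}\mathbb{X}\,p(\mathbb{X})\big)$.

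Taking the difference of the two conditional means, the common denominator factors out and the numerator integrand collapses to $\big(\mathbb{E}[Y|Z=1,\mathbb{X}]-\mathbb{E}[Y|Z=0,\mathbb{X}]\big)p(\mathbb{X})$. Here I would invoke Assumption~\ref{as: str_ign} (sequential ignorability), which through the second line of Eq.~(\ref{eq:CDE}) identifies the bracket with the point-wise $\mathrm{CDE}(\mathbb{X})$, yielding exactly the second line of Eq.~(\ref{eq:CDEb}). Assumption~\ref{as:overlap} (overlap) enters to guarantee that both $\mathbb{E}[Y|Z=0,\mathbb{X}]$ and $\mathbb{E}[Y|Z=1,\mathbb{X}]$, and hence both conditional means $\mathbb{E}[Y|Z=z,b]$, are well-defined wherever $p(\mathbb{X})>0$, so that no level set is populated by a single group.

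The step I expect to be the main obstacle is not the algebra but the measure-theoretic meaning of the level-set integrals $\int_{\mathcal{X}_b}\mathrm{d}\mathbb{X}$: since $b$ maps the generically high-dimensional $\mathbb{X}$ to a scalar, each $\mathcal{X}_b$ is a lower-dimensional surface of $p(\mathbb{X})$-measure zero, so these expressions must be read through a proper disintegration of $p(\mathbb{X})$ with respect to $b$ (equivalently, via the conditional density $p(\mathbb{X}|b)$ or a coarea-type argument). Once that disintegration is fixed, the balancing-score independence of Eq.~(\ref{eq:bal-scores}) is precisely the statement that the conditional law appearing in it is $z$-free, which is the entire content of the result; everything else is bookkeeping.
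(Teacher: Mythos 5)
Your proof is correct and takes essentially the same route as the paper's: iterated expectation over the level set $\mathcal{X}_b$, dropping the redundant conditioning on $b$ given $\mathbb{X}$, writing $p(\mathbb{X}|b)$ as the normalized restriction of $p(\mathbb{X})$ to $\mathcal{X}_b$, and using the balancing-score property to make the covariate law inside the level set $Z$-free --- the paper reaches that same substitution slightly more indirectly, via Bayes' rule and the cancellation $P(Z|\mathbb{X},b)=P(Z|b)$, which is equivalent to your direct replacement of $p(\mathbb{X}|Z,b)$ by $p(\mathbb{X}|b)$. Your closing observations on the role of overlap and on the disintegration needed to interpret $\int_{\mathcal{X}_b}\mathrm{d}\mathbb{X}$ are sound and address points the paper leaves implicit.
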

\begin{proof}
In order to prove our hypothesis, it is sufficient to show that
\[
\mathbb{E}[Y | b, Z] = \frac{\int_{\mathcal{X}_b} \mathrm{d}\mathbb{X}\  p(\mathbb{X}) \mathbb{E}[Y | \mathbb{X}, Z]}{\int_{\mathcal{X}_b} \mathrm{d}\mathbb{X}\  p(\mathbb{X})}
\]
We have:

\begin{align}
\label{eq:theorem}
& \mathbb{E}[Y | b, Z] \\ \nonumber
&= \int \mathrm{d}\mathbb{X}\ \mathbb{E}[Y | b, Z, \mathbb{X}] p(\mathbb{X} | b, Z) \\ \nonumber
&= \int \mathrm{d}\mathbb{X}\ \mathbb{E}[Y | b, Z, \mathbb{X}] \frac{P(Z | \mathbb{X}, b) p (\mathbb{X} | b)}{P(Z | b)} \\ \nonumber
&= \frac{\int_{\mathcal{X}_b} \mathrm{d}\mathbb{X}\ \mathbb{E}[Y | b, Z, \mathbb{X}] \frac{P(Z | \mathbb{X}, b) p (\mathbb{X})}{P(Z | b)}}{\int_{\mathcal{X}_b} \mathrm{d}\mathbb{X}\  p(\mathbb{X})} \\ \nonumber
&= \frac{\int_{\mathcal{X}_b} \mathrm{d}\mathbb{X}\  p(\mathbb{X}) \mathbb{E}[Y | \mathbb{X}, Z]}{\int_{\mathcal{X}_b} \mathrm{d}\mathbb{X}\  p(\mathbb{X})} \nonumber
\end{align}

Where line 4 follows from $p (\mathbb{X} | b) = I(\mathbb{X} \in \mathcal{X}_b)\ p(\mathbb{X}) / \int_{\mathcal{X}_b} \mathrm{d}\mathbb{X}\  p(\mathbb{X})$. The final line follows from the definition of the balancing score.
\end{proof}

In order to justify Eqs. (\ref{eq:rubin}) and (\ref{eq:yf_nie}) we also need to prove the following:

\begin{theorem}
Under the assumptions of Theorem 1, the following equations hold:

\begin{align}
 \mathbb{E}_{\mathbb{X}}[\mathrm{CDE}(\mathbb{X})] &= \int \mathrm{d}b\ \mathrm{MFCDE}(b) p(b) \label{eq:theor-1} \\
 \mathrm{NDE}(Z^*) &= \int \mathrm{d}b\ \mathrm{MFCDE}(b) p(b|Z=Z^*), \label{eq:theor-2}
\\ 
 &\forall Z^* \in \{ 0, 1 \} \nonumber
\end{align}
 \begin{proof}
  The proof of Eqs. (\ref{eq:theor-1}) and (\ref{eq:theor-2}) are very similar. We'll prove Eq. (\ref{eq:theor-2}) and leave the proof of Eq. (\ref{eq:theor-1}) to the reader. It is sufficient to show that, for each value of $Z', Z^* \in \{0, 1\}$ we have
\[
\begin{aligned}
&\int \mathrm{d}\mathbb{X}\  p(\mathbb{X}) \mathbb{E}[Y | \mathbb{X}, Z=Z'] =\\
&\int \mathrm{d}b\  p(b|Z=Z^*) \mathbb{E}[Y | b, Z=Z'].
\end{aligned}
\]
The rest of the thesis follows straightforwardly from the definitions of CDE [Eq. (\ref{eq:CDE})], MFCDE [Eq. (\ref{eq:CDEb})] and NDE [Eq. (\ref{eq:NDE})].
We have:
\[
\begin{aligned}
&\int \mathrm{d}b\ \mathbb{E}[Y | b, Z=Z']  p(b|Z=Z^*) \\ \nonumber
&= \int \mathrm{d}b\  \mathrm{d}\mathbb{X}\ \mathbb{E}[Y | b, Z=Z', \mathbb{X}] p(\mathbb{X} | b, Z=Z') p(b|Z=Z^*) \\ \nonumber
&= \int \mathrm{d}b\  \mathrm{d}\mathbb{X}\ \Big\{ \mathbb{E}[Y | b, Z=Z', \mathbb{X}] \\
& \hspace{60px} \frac{P(Z=Z' | \mathbb{X}, b) p (b|\mathbb{X}) p(\mathbb{X})}{P(Z=Z' | b)p(b)} p(b|Z=Z^*)\Big\} \\ \nonumber
&= \int \mathrm{d}b\  \mathrm{d}\mathbb{X}\ \Big\{ \mathbb{E}[Y | b, Z=Z', \mathbb{X}] \\
& \hspace{60px} \frac{P(Z=Z' | \mathbb{X}, b) \delta(b - b(\mathbb{X})) p(\mathbb{X})}{P(Z=Z' | b)p(b)} p(b|Z=Z^*) \Big\} \\ \nonumber
&= \int \mathrm{d}\mathbb{X}\ \Big\{ \mathbb{E}[Y | Z=Z', \mathbb{X}] \\
 & \hspace{60px} \frac{P(Z=Z' | \mathbb{X}, b(\mathbb{X})) p(\mathbb{X})}{P(Z=Z' | b(\mathbb{X}))p(b(\mathbb{X}))} p(b(\mathbb{X})|Z=Z^*)) \Big\} \\ \nonumber
&= \int \mathrm{d}\mathbb{X}\ \mathbb{E}[Y | Z=Z', \mathbb{X}] p(\mathbb{X}) \frac{P(Z=Z^*|\mathbb{X})}{P(Z=Z^*)} \\ \nonumber
&= \int \mathrm{d}\mathbb{X}\ \mathbb{E}[Y | Z=Z', \mathbb{X}] p (\mathbb{X} | Z=Z^*) \\ \nonumber
\end{aligned}
\]
where line 3 follows from two applications of Bayes' rule, $\delta(\bullet)$ in line 4 is Dirac's delta distribution, line 5 follows from integrating $b$ out, line 6 again from Bayes' rule and the definition of a balancing score, which entails $P(Z|b(\mathbb{X})) = P(Z|(\mathbb{X})$; the final line follows straightforwardly, from a reverse application of Bayes' rule.
 \end{proof}
\end{theorem}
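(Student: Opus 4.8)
The plan is to recognize that both displayed equations reduce to a single ``change of variables'' identity that pushes an expectation conditioned on the full covariate vector $\mathbb{X}$ down to one conditioned only on the balancing score $b$. Concretely, I would first reduce each claim to showing, for every pair $Z', Z^* \in \{0,1\}$, the identity
\[
\int \mathrm{d}b\ \mathbb{E}[Y | b, Z=Z']\, p(b|Z=Z^*) = \int \mathrm{d}\mathbb{X}\ \mathbb{E}[Y | \mathbb{X}, Z=Z']\, p(\mathbb{X}|Z=Z^*),
\]
together with its unconditional analogue in which $p(b|Z=Z^*)$ and $p(\mathbb{X}|Z=Z^*)$ are replaced by the marginals $p(b)$ and $p(\mathbb{X})$. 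Granting this identity, Eq.~(\ref{eq:theor-2}) follows by taking the difference between the $Z'=1$ and $Z'=0$ instances: the left-hand side assembles into $\int \mathrm{d}b\ \mathrm{MFCDE}(b)\, p(b|Z=Z^*)$ by the definition of MFCDE [Eq.~(\ref{eq:CDEb})], while the right-hand side becomes $\int \mathrm{d}\mathbb{X}\ \mathrm{CDE}(\mathbb{X})\, p(\mathbb{X}|Z=Z^*)$ by the definition of CDE [Eq.~(\ref{eq:CDE})], which is exactly $\mathrm{NDE}(Z^*)$ per Eq.~(\ref{eq:NDE}). Equation~(\ref{eq:theor-1}) is the same argument run against the marginal $p(b)$, so I would prove the conditional version in detail and state the marginal one as the obvious specialization.

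To establish the core identity I would start from the left-hand side and expand $\mathbb{E}[Y|b,Z=Z']$ by the law of total expectation over $\mathbb{X}$, introducing $p(\mathbb{X}|b,Z=Z')$. Applying Bayes' rule rewrites this conditional as $P(Z=Z'|\mathbb{X},b)\, p(b|\mathbb{X})\, p(\mathbb{X}) / [P(Z=Z'|b)\, p(b)]$. The crucial structural fact is that $b = b(\mathbb{X})$ is a \emph{deterministic} function of the covariates, so $p(b|\mathbb{X}) = \delta(b - b(\mathbb{X}))$; substituting this and integrating out $b$ collapses every occurrence of $b$ to $b(\mathbb{X})$ and eliminates the $b$-integral. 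At this point the integrand carries the propensity ratio $P(Z=Z'|\mathbb{X},b(\mathbb{X})) / P(Z=Z'|b(\mathbb{X}))$ times the factor $p(b(\mathbb{X})|Z=Z^*)/p(b(\mathbb{X}))$. The balancing-score property $\mathbb{X} \independent Z \mid b(\mathbb{X})$ gives $P(Z|\mathbb{X},b(\mathbb{X})) = P(Z|b(\mathbb{X})) = P(Z|\mathbb{X})$, which cancels the first ratio to unity, and a final application of Bayes' rule turns $p(\mathbb{X})\, P(Z=Z^*|\mathbb{X})/P(Z=Z^*)$ into $p(\mathbb{X}|Z=Z^*)$, delivering the right-hand side.

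I expect the main obstacle to be the careful bookkeeping around the deterministic balancing score: writing $p(b|\mathbb{X})$ as a Dirac delta and justifying the interchange of integration that collapses the $b$-integral is the step where measure-theoretic sloppiness most easily creeps in, and it is also where the balancing-score assumption must be invoked twice --- once to reduce $P(Z=Z'|\mathbb{X},b(\mathbb{X}))$ and once to convert $p(b|Z=Z^*)/p(b)$ back to $P(Z=Z^*|\mathbb{X})/P(Z=Z^*)$. Everything else is routine Bayes-rule manipulation, and the overlap Assumption~\ref{as:overlap} is what guarantees the conditioning events and the denominators $P(Z=Z'|b)$ and $P(Z=Z^*)$ remain well defined throughout.
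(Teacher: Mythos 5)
Your proposal is correct and follows essentially the same route as the paper's own proof: the same reduction to a single identity between $b$-conditioned and $\mathbb{X}$-conditioned expectations, the same expansion via the law of total expectation, the Dirac-delta representation $p(b|\mathbb{X}) = \delta(b - b(\mathbb{X}))$, and the same two invocations of the balancing-score property $P(Z|b(\mathbb{X})) = P(Z|\mathbb{X})$ followed by a reverse Bayes step. If anything, your statement of the core identity (with $p(\mathbb{X}|Z=Z^*)$ rather than $p(\mathbb{X})$ on the covariate side) matches what the paper's computation actually derives and corrects an apparent typo in the paper's stated ``sufficient'' condition.
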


Eq. (\ref{eq:yf_nie}) is then derived from Eqs. (\ref{eq:theor-2}) and (\ref{eq:ate-cie}) with minimal algebra. 
\bibliography{fairness}
\bibliographystyle{ACM-Reference-Format}
\newpage
\section*{Reproducibility Notes}
\begin{printonly}
Supplementary materials are available in the online version of this paper.
\end{printonly}
\subsection*{Scope}
 In this document, we wish to include a few details that should help the reader reproduce our results. In particular, we will be focused on the results for the synthetic dataset and the UCI adult dataset (Figures 3 and 4, respectively, of the main text). The credit-risk results will not be reproducible due to the private nature of the dataset.
 \subsection*{Technology}
 We used python v. 3.6, with the main packages employed being scikit-learn v. 0.21, pandas v. 0.24, numpy v. 1.16 and xgboost v. 0.82.
\subsection*{Computation of gradients and diagonal hessians}
In this section we will give insights on how to compute gradients and diagonal hessians w.r.t. the probability scores of the model evaluated in each training example. Specifically, given the model scores on the training set $\tilde{Y}_i$, we want therefore to compute $\partial \mathcal{R}_f^{\mathrm{CDE}} / \partial \tilde{Y}_i$ and $\partial^2 \mathcal{R}_f^{\mathrm{CDE}} /\partial \tilde{Y}_i^2$ for all the examples $i$ in the training set. Using these derivatives, XGBoost models can be trained directly\cite{XGBoost2016}, and any parametric model can be trained evaluating $\nabla_{\theta} \mathcal{R}_f^{\mathrm{CDE}}$ through the chain rule as usual. Since these computations are quite trivial for the loss of Eq. (\ref{eq:spd-loss}), we will only focus on the loss of Eq. (\ref{eq:cde-loss}).

The approach we used was to compute the regression coefficients of both Eqs. (\ref{eq:y_b_z}) and (\ref{eq:yf}) using OLS regression. Given the coefficients of Eq. (\ref{eq:y_b_z}), the loss of Eq. (\ref{eq:spd-loss}) can be seen as a function of the coefficients of Eq. (\ref{eq:yf}), that we'll collectively denote $\mathbf{\zeta}$:

\begin{equation}
 \mathbf{\zeta} = [\tilde{\alpha}_0, ..., \tilde{\alpha}_{\max(N_1, N_2)}, \tilde{\beta}_0, ..., \tilde{\beta}_{N_2}] .
\end{equation}

We want to express $\mathbf{\zeta}$ as a function of the $\tilde{Y}_i$, so that computing the desired derivatives will be straightforward to the reader.

We define a regression matrix $\Gamma$ whose rows $\Gamma_i$ for each example in the training set are given by:

\begin{equation}
 \Gamma_i = [1, b_i, b_i^2, ..., b^{\max(N_1, N_2)}, Z_i, b Z_i, ..., b_i^{N_2} Z_i],
\end{equation}

where $b_i$ and $Z_i$ are the balancing scores and protected attribute for the $i$-th example. The coefficients $\mathbf{\zeta}$ in OLS regression are then easily found to be:

\begin{equation}
 \mathbf{\zeta}(\mathbf{\tilde{Y}}) = (\Gamma \Gamma^T)^{-1} \Gamma^T \mathbf{\tilde{Y}}
\end{equation}

where $\mathbf{\tilde{Y}}$ is the vector of the scores evaluated in each training example.
\subsection*{Data splits, pre-processing and hyperparameters}
For the UCI adult dataset, we used the official train/test split and used one-hot encoding for the categorical variables. For the synthetic dataset, we sampled $10^5$ points and used $33\%$ for testing. We used scikit-learn's StandardScaler to preprocess all the datasets.

For our logistic regression models we did not use any hyperparameter apart from $\lambda$. For our XGBoost models, with reference to the python API, we used the default parameters with the exception of $\mathrm{reg\_lambda} = 10$, $\mathrm{learning\_rate} = 0.1$ and $\mathrm{max\_depth} = 2$.
\subsection*{Propensity Scores}
Propensity scores were computed using scikit-learn's LogisticRegression and we used GridSearchCv from the same library to search for the inverse $L_1$ penalty term $\mathrm{C} \in \{10^{-2}, 10^{-1}, 1, 10, 10^2 \}$. We used 5-Fold cross-validation scored by accuracy.
\subsection*{Training procedures}
We trained all our models using a double early stopping procedure as follows. For logistic regression, we initialized the weights of every model to the ones of a logistic regression trained with scikit-learn with the default parameters (we used the ``liblinear'' solver) and subsequently used gradient descent using our modified loss. We also defined an early stopping set, which was the whole training set in the logistic regression case and a separate holdout set for XGBoost. The holdout set was derived using $33 \%$ \textdiscount of the training set (we used a numpy.random seed fixed at $123$ throughout).

Then, for each value of $\lambda$, we used the following procedure:
\begin{enumerate}
 \item Define $\lambda^* = \min(\lambda, 0.3)$
 \item Train using the loss $ \mathcal{L}_f = (1 - \lambda^*) \mathcal{L}_{o} + \lambda^* \mathcal{R}_{f}$ until $\mathcal{L}_{o}$ does not improve on the early stopping set for $5$ steps
 \item Increase $\lambda^*$ linearly over 50 steps from $\min(\lambda, 0.3)$ to $\lambda$
 \item Train using the loss $ \mathcal{L}_f = (1 - \lambda) \mathcal{L}_{o} + \lambda \mathcal{R}_{f}$ until $\mathcal{L}_f$ does not improve on the early stopping set for $20$ steps
\end{enumerate}

\end{document}